\newtheorem{defn}{Definition}
\newtheorem{asm}{Assumption}
\newtheorem{lem}{Lemma}
\title{Explainable Recommendation via Interpretable Feature Mapping and Evaluation of Explainability\footnote{Proceedings of the Twenty-Ninth International Joint Conference on Artificial Intelligence}}
\author{
Deng Pan\and
Xiangrui Li\and
Xin Li\And
Dongxiao Zhu\thanks{Corresponding author}\\
\affiliations
Department of Computer Science\\
Wayne State University, USA\\
\emails
\{pan.deng, xiangruili, xinlee, dzhu\}@wayne.edu
}
\begin{document}
 
\maketitle

\begin{abstract}
Latent factor collaborative filtering (CF) has been a widely used technique for recommender system by learning the semantic representations of users and items. Recently, explainable recommendation has attracted much attention from research community. However, trade-off exists between explainability and performance of the recommendation where metadata is often needed to alleviate the dilemma. We present a novel feature mapping approach that maps the uninterpretable general features onto the interpretable aspect features, achieving both satisfactory accuracy and explainability in the recommendations by simultaneous minimization of rating prediction loss and interpretation loss. To evaluate the explainability, we propose two new evaluation metrics specifically designed for aspect-level explanation using surrogate ground truth. Experimental results demonstrate a strong performance in both recommendation and explaining explanation, eliminating the need for metadata. Code is available from \url{https://github.com/pd90506/AMCF}.
\end{abstract}


\section{Introduction}
Since the inception of the Netflix Prize competition, latent factor collaborative filtering (CF) has been continuously adopted by various recommendation tasks due to its strong performance over other methods \cite{koren2009matrix}, which essentially employs a latent factor model such as matrix factorization
and/or neural networks
to learn user or item feature representations for rendering recommendations. Despite much success, latent factor CF approaches often suffer from the lack of interpretability \cite{zhang2018explainable}. In a contemporary recommender system, explaining why a user likes an item can be as important as the accuracy of the rating prediction itself \cite{zhang2018explainable}.

Explainable recommendation can improve transparency, persuasiveness and trustworthiness of the system \cite{zhang201919}. To make intuitive explanation for recommendations, recent efforts have been focused on using metadata such as user defined tags and topics from user review texts or item descriptions\cite{lu2018like,chen2018neural} to illuminate users preferences. Other works such as \cite{hou2019explainable,he2015trirank,zhang2014explicit} use \emph{aspects} to explain recommendations.
Although these approaches can explain recommendation using external metadata, the interpretability of the models themselves and the \emph{interpretable features} enabling the explainable recommendations have still not been systematically studied and thus, are poorly understood.
It is also worth mentioning that the challenges in explainable recommendation not only lie in the modeling itself, but also in the lack of a gold standard for evaluation of explainability. 

Here we propose a novel feature mapping strategy that not only enjoys the advantages of strong performance in latent factor models but also is capable of providing explainability via interpretable features. The main idea is that by mapping the {\it general features} learned using a base latent factor model onto interpretable {\it aspect features}, one could explain the outputs using the aspect features without compromising the recommendation performance of the base latent factor model. We also propose two new metrics for evaluating the quality of explanations in terms of a user's general preference over all items and the aspect preference to a specific item. Simply put,
we formulate the problem as: 1) how to find the interpretable aspect basis; 2) how to perform interpretable feature mapping; and 3) how to evaluate explanations. 

We summarize our main contributions as follows: 1) We propose a novel feature mapping approach to map the general uninterpretable features to interpretable aspect features, enabling explainability of the traditional latent factor models without metadata; 2) Borrowing strength across aspects, our approach is capable of alleviating the trade-off between recommendation performance and explainability; and 3) We propose new schemes for evaluating the quality of explanations in terms of both general user preference and specific user preference.

\section{Related Work}
There are varieties of strategies for rendering explainable recommendations. We first review methods that give explanations in light of aspects, which are closely related to our work. We then discuss other recent explainable recommendation works using metadata and knowledge in lieu of aspects.

\subsection{Aspect Based Explainable Recommendation}
Aspects can be viewed as explicit features of an item that could provide useful information in recommender systems. 
An array of approaches have been developed to render explainable recommendations at the aspect level using metadata such as user reviews.
These approaches mostly fall into three categories: 1) Graph-based approaches: they incorporate aspects as additional nodes in the user-item bipartite graph. For example, TriRank \cite{he2015trirank} extract aspects from user reviews and form a user-item-aspect tripartite graph with smoothness constraints, achieving a review-aware top-N recommendation. ReEL \cite{baral2018reel} calculate user-aspect bipartite from location-aspect bipartite graphs, which infer user preferences. 2) Approaches with aspects as regularizations or priors: they use the extracted aspects as additional regularizations for the factorization models. For example, AMF \cite{hou2019explainable} construct an additional user-aspect matrix and an item-aspect matrix from review texts, as regularizations for the original matrix factorization models. JMARS \cite{diao2014jointly} generalize probabilistic matrix factorization by incorporating user-aspect and movie-aspect priors, enhancing recommendation quality by jointly modeling aspects, ratings and sentiments from review texts. 3) Approaches with aspects as explicit factors: other than regularizing the factorization models, aspects can also be used as factors themselves. \cite{zhang2014explicit} propose an explicit factor model (EMF) that factorizes a rating matrix in terms of both predefined explicit features (i.e. aspects) as well as implicit features, rendering aspect-based explanations. Similarly, \cite{chen2016learning} extend EMF by applying tensor factorization on a more complex user-item-feature tensor.

\subsection{Beyond Aspect Explanation}
There are also other approaches that don't utilize aspects to explain recommendations. For example, \cite{lee2018explainable} give explanations in light of the movie similarities defined using movie characters and their interactions; \cite{wang2019explainable} propose explainable recommendations by exploiting knowledge graphs where paths are used to infer the underlying rationale of user-item interactions. With the increasingly available textual data from users and merchants, more approaches have been developed for explainable recommendation using metadata. For example, \cite{chen2019co-attentive,costa2018automatic,lu2018like} attempt to generate textual explanations directly whereas \cite{wu2019context,chen2019dynamic} give explanations by highlighting the most important words/phrases in the original reviews.

Overall, most of the approaches discussed in this section rely on metadata and/or external knowledge to give explanations without interpreting the model itself. In contrast, our Attentive Multitask Collaborative Filtering (AMCF) approach maps uninterpretable general features to interpretable aspect features using an existing aspect definition, as such it not only gives explanations for users, but also learns interpretable features for the modelers. Moreover, it is possible to adopt any latent factor models as the base model to derive the general features for the proposed feature mapping approach. 


\section{The Proposed AMCF Model}\label{sec:model}
In this section, we first introduce the problem formulation and the underlying assumptions. We then present our AMCF approach for explainable recommendations. AMCF incorporates aspect information and maps the latent features of items to the aspect feature space using an attention mechanism. With this mapping, we can explain recommendations of AMCF from the aspect perspective. An \textbf{Aspect} $\boldsymbol{s}$ \cite{bauman2017aspect} is an attribute that characterizes an item. Assuming there are totally $m$ aspects in consideration, if an item has aspects $s_{i_1}, ..., s_{i_k}$ simultaneously, an item can then be described by $ \mathcal{I}_i = \{s_{i_1}, s_{i_2},..., s_{i_k}\}$, $k\leq m$.
We say that an item $i$ has aspect $s_j$, if $s_j\in\mathcal{I}_i$.


\subsection{Problem Formulation}


\textbf{Inputs}: The inputs consist of $3$ parts: the set of users $U$, the set of items $V$, and the set of corresponding multi-hot aspect vectors for items, denoted by $S$.

\noindent\textbf{Outputs}: Given the user-item-aspect triplet, e.g. user $i$, item $j$, and aspect multi-hot vector $s_j$ for item $j$, our model not only predicts the review rating, but also the user general preference over all items and the user specific preference on item $j$ in terms of aspects, i.e., which aspects of the item $j$ that the user $i$ is mostly interested in.

\subsection{Rationale}
The trade-off between model interpretability and performance states that we can either achieve high interpretability with simpler models or high performance with more complex models that are generally harder to interpret \cite{zhang2018explainable}. Recent works \cite{zhang2018explainable,he2015trirank,zhang2014explicit} have shown that with adequate metadata and knowledge, it is possible to achieve both explainability and high accuracy in the same model. 
However, those approaches mainly focus on explanation of the recommendation, rather than exploiting the interpretability of the models and features, and hence are still not interpretable from modeling perspective. Explainability and interpretability refer to ``why" and ``how" a recommendation is made, respectively. Many above-referenced works only answer the ``why" question via constraints from external knowledge without addressing ``how". Whereas our proposed AMCF model answers both ``why" and ``how" questions, i.e., our recommendations are made based on the attention weights (why) and the weights are learned by interpretable feature decomposition (how). To achieve this, we assume that an interpretable aspect feature representation can be mathematically derived from the corresponding general feature representation. More formally:


\begin{asm}\label{asm:formal}
Assume there are two representations for the same prediction task: $\boldsymbol{u}$ in complex feature space $\mathcal{U}$ (i.e. general embedding space including item embedding and aspect embedding), and $\boldsymbol{v}$ in simpler feature space $\mathcal{V}$ (i.e. space spanned by aspect embeddings), and $\mathcal{V}\subset \mathcal{U}$. We say that $\boldsymbol{v}$ is the projection of $\boldsymbol{u}$ from space $\mathcal{U}$ to space $\mathcal{V}$, and there exists a mapping $\boldsymbol{M(\cdot, \theta)}$, such that $\boldsymbol{v} = \boldsymbol{M(\boldsymbol{u}, \theta)}$, with $\theta$ as a hyper-parameter.
\end{asm}

This assumption is based on the widely accepted notion that a simple local approximation can give good interpretation of a complex model in that particular neighborhood \cite{ribeiro2016should}. Instead of selecting surrogate interpretable simple models (such as linear models), we map the general complex features to the simpler interpretable aspect features, 
then render recommendation based on those general complex features. We give explanations using interpretable aspect features,
achieving the best of both worlds in keeping the high performance of the complex model as well as gaining the interpretability of the simpler model. In this work, the \textit{interpretable simple features} are obtained based on \textit{aspects}, hence we call the corresponding feature space as \textit{aspect space}. To map the complex general features onto the interpretable aspect space, we define the aspect projection.

\begin{defn}\label{def:fp}
\textbf{(Aspect Projection)} Given Assumption~\ref{asm:formal}, we say $\boldsymbol{v}$ is an aspect projection of $\boldsymbol{u}$ from general feature space $\mathcal{U}$ to aspect feature space $\mathcal{V}$ (Figure~\ref{fig:int_space}).
\end{defn}

To achieve good interpretability and performance in the same model, from Definition~\ref{def:fp} and Assumption~\ref{asm:formal}, we need to find the mapping $\boldsymbol{M(\cdot, \theta)}$. Here we first use a latent factor model as the base model for explicit rating prediction, which learns general features, as shown in Figure~\ref{fig:amcf} (left, $L_{pred}$), where we call the \textit{item embedding} $\boldsymbol{u}$ as the general complex feature learned by the base model. Then the remaining problem is to derive the mapping from the non-interpretable general features to the interpretable aspect features. 

\subsection{Aspect Embedding}
\label{sec:ext}
To design a simple interpretable model, its features should be well aligned to our interest, e.g. the \emph{aspects} is a reasonable choice. 
Taking movie genre as an example: if we use $4$ genres (Romance, Comedy, Thriller, Fantasy) as $4$ aspects, the movie \emph{Titanic}'s aspect should be represented by $(1, 0, 0, 0)$ because it's romance genre, and the movie \emph{Cinderella}'s aspect is $(1, 0, 0, 1)$ because it's genre falls into both romance and fantasy. 

From Assumption~\ref{asm:formal} and Definition~\ref{def:fp}, to make the feature mapping from a general feature $\boldsymbol{u}$ to an aspect feature  $\boldsymbol{v}$, we need to first define the aspect space $\mathcal{V}$. Assuming there are $m$ aspects in consideration, we represent the $m$ aspects by $m$ latent vectors in general space $\mathcal{U}$, and use these $m$ aspect vectors as the basis that spans the aspect space $\mathcal{V}\subset \mathcal{U}$. These aspects' latent vectors can be learned by neural embedding or other feature learning methods, with each aspect corresponding to an individual latent feature vector. Our model uses embedding approach to extract $m$ aspect latent vectors of $n$-dimension, where $n$ is the dimension of space $\mathcal{U}$. In Figure~\ref{fig:amcf}, the vertical columns in red ($\boldsymbol{\psi}_1, ..., \boldsymbol{\psi}_m$) represent $m$ aspect embeddings in the general space $\mathcal{U}$, which is obtained by embedding the aspect multi-hot vectors from input.


\subsection{Aspect Projection of Item Embedding}
\label{sec:att}
In Assumption~\ref{asm:formal}, $\boldsymbol{u}$ is the general feature representation (i.e. the item embedding) in space $\mathcal{U}$, and $\boldsymbol{v}$ is the interpretable aspect feature representation in space $\mathcal{V}$. The orthogonal projection from the general space $\mathcal{U}$ to the aspect space $\mathcal{V}$ is denoted by $\boldsymbol{M}$, i.e. $\boldsymbol{v} = \boldsymbol{M}(\boldsymbol{u})$. 

From the perspective of learning disentangled representations, the item embedding $\boldsymbol{u}$ can be disentangled as $\boldsymbol{u}=\boldsymbol{v}+\boldsymbol{b}$ (Figure~\ref{fig:int_space}), where $\boldsymbol{v}$ encodes the aspect information of an item and $\boldsymbol{b}$ is the item-unique information. For example, movies from the same genre share similar artistic style ($\boldsymbol{v}$) yet each movie has its own unique characteristics ($\boldsymbol{b}$). With this disentanglement of item embeddings, we can explain recommendation via capturing user's preference in terms of aspects.

Let's assume that we have $m$ linearly independent and normalized \textit{aspect} vectors $(\boldsymbol{\psi}_1, ..., \boldsymbol{\psi}_m)$ in space $\mathcal{U}$, which span subspace $\mathcal{V}$. For any vector $\boldsymbol{v} = \boldsymbol{M}(\boldsymbol{u})$ in space $\mathcal{V}$, there exists an unique decomposition such that $\boldsymbol{v} = \sum_{i=1}^m v_i \boldsymbol{\psi}_i$. 
The coefficients can be directly calculated by $v_i = \boldsymbol{v}\cdot \boldsymbol{\psi}_i = \boldsymbol{u}\cdot \boldsymbol{\psi}_i$, ($i = 1, ..., m$, $\boldsymbol{\psi}_i$ is normalized). Note that the second equality comes from the fact that $\boldsymbol{v}$ is the orthogonal projection of $\boldsymbol{u}$ on space $\mathcal{V}$.

Generally speaking, however, $(\boldsymbol{\psi}_1, ..., \boldsymbol{\psi}_m)$ are not orthogonal. In this case, as long as they are linearly independent, we can perform Gram-Schmidt orthogonalization process to obtain the corresponding orthogonal basis. The procedure can be simply described as follows:
    $\tilde{\boldsymbol{\psi}}_1 = \boldsymbol{\psi}_1; \text{ and }
    \tilde{\boldsymbol{\psi}}_i = \boldsymbol{\psi}_i - \sum_{j=1}^{i-1} \langle\boldsymbol{\psi}_i, \tilde{\boldsymbol{\psi}}_j\rangle \tilde{\boldsymbol{\psi}}_j,$
where $\langle\boldsymbol{\psi}_i, \tilde{\boldsymbol{\psi}}_j\rangle$ denotes inner product. We can then calculate the unique decomposition as in the orthogonal cases. Assume the resulting decomposition is $\boldsymbol{v} = \sum_{i=1}^m \tilde{v}_i \tilde{\boldsymbol{\psi}}_i$, the coefficients corresponding to the original basis $(\boldsymbol{\psi}_1, ..., \boldsymbol{\psi}_m)$ can then be calculated by:
    $v_i = \tilde{v}_i - \sum_{j=i+1}^m \langle\boldsymbol{\psi}_i, \tilde{\boldsymbol{\psi}}_j\rangle;\text{ and }
    v_m = \tilde{v}_m.$

Hence, after the aspect feature projection and decomposition, regardless of orthogonal or not, we have the following unique decomposition in space $\mathcal{V}$: $\boldsymbol{v} = \sum_{i=1}^m v_i \boldsymbol{\psi}_i$.

\begin{figure}[t]
	\centering
	\resizebox{0.6\linewidth}{!}{
		\includegraphics[width=0.5\textwidth]{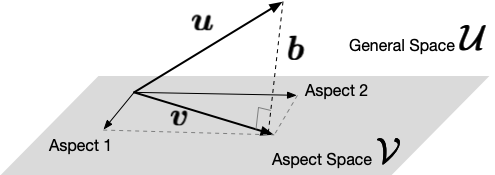}
	}
	\caption{An illustration of interpretable feature mapping. $\boldsymbol{u}$ is an uninterpretable feature in general space $\mathcal{U}$, and $\boldsymbol{v}$ is the interpretable projection of $\boldsymbol{u}$ in the interpretable aspect space $\mathcal{V}$. Here $\boldsymbol{b}$ indicates the difference between $\boldsymbol{u}$ and $\boldsymbol{v}$.}
	\label{fig:int_space}
\end{figure}

\begin{figure}[t]
	\centering
	\resizebox{\linewidth}{!}{
	\includegraphics[width=0.5\textwidth]{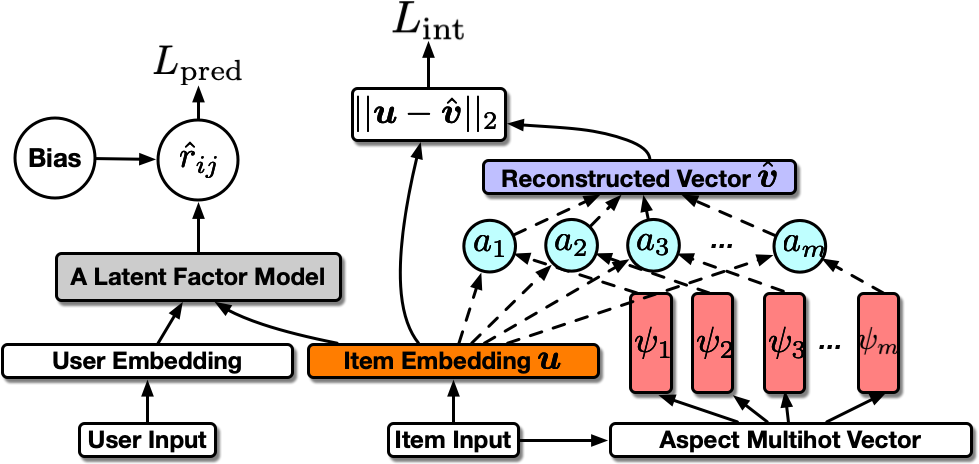}
	}
	\caption{The training phase: explainable recommendation via interpretable feature mapping.}
	\label{fig:amcf}
	
\end{figure}

\noindent \textbf{Aspect Projection via Attention}:
As described above, any interpretable aspect feature $\boldsymbol{v}$ can be uniquely decomposed as $\boldsymbol{v} = \sum_{i=1}^m v_i \boldsymbol{\psi}_i$, which is similar to the form of attention mechanism. Therefore, instead of using Gram-Schmidt orthogonalization process, we utilize attention mechanism to reconstruct $\boldsymbol{v}$ directly.  Assume we can obtain an attention vector $\boldsymbol{a} = (a_1, ..., a_m)$, which can be used to calculate $\hat{\boldsymbol{v}} = \sum_{i=1}^m a_i \boldsymbol{\psi}_i$, with the fact that the decomposition is unique, our goal is then to minimize the distance $||\hat{\boldsymbol{v}} - \boldsymbol{v}||_2$ to ensure that $a_i \approx v_i$.

However, as the interpretable aspect feature $\boldsymbol{v}$ is not available, we cannot minimize $||\hat{\boldsymbol{v}} - \boldsymbol{v}||_2$ directly. Fortunately, the general feature $\boldsymbol{u}$ is available (obtained from a base latent factor model), with the fact that $\boldsymbol{v}$ is the projection of $\boldsymbol{u}$, i.e. $\boldsymbol{v}=\boldsymbol{M}(\boldsymbol{u})$, we have the following lemma:
\begin{lem}\label{lm}
  Provided that $\boldsymbol{v}$ is the projection of $\boldsymbol{u}$ from space $\mathcal{U}$ to space $\mathcal{V}$, where $\mathcal{V}\subset \mathcal{U}$, we have
  \begin{align*}
      {\arg\min}_{\boldsymbol{a}} ||\hat{\boldsymbol{v}} - \boldsymbol{v}||_2 = {\arg\min}_{\boldsymbol{a}} ||\hat{\boldsymbol{v}} - \boldsymbol{u}||_2,
  \end{align*}
 where $\hat{\boldsymbol{v}} = \sum_{i=1}^m a_i \boldsymbol{\psi}_i$, $\boldsymbol{a} = (a_1,...,a_m)$, and $||\cdot||_2$ denotes $l_2$ norm.
\end{lem}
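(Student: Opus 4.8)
The plan is to exploit the orthogonal decomposition $\boldsymbol{u} = \boldsymbol{v} + \boldsymbol{b}$, where $\boldsymbol{v} = \boldsymbol{M}(\boldsymbol{u})$ is the orthogonal projection of $\boldsymbol{u}$ onto $\mathcal{V}$ and $\boldsymbol{b} = \boldsymbol{u} - \boldsymbol{v}$ is orthogonal to every vector in $\mathcal{V}$. Since $\hat{\boldsymbol{v}} = \sum_{i=1}^m a_i \boldsymbol{\psi}_i$ lies in $\mathcal{V}$ for every choice of $\boldsymbol{a}$, the vector $\hat{\boldsymbol{v}} - \boldsymbol{v}$ also lies in $\mathcal{V}$ and is therefore orthogonal to $\boldsymbol{b}$. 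I would first state this orthogonality explicitly as the key structural fact.

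Next I would apply the Pythagorean identity. Writing $\hat{\boldsymbol{v}} - \boldsymbol{u} = (\hat{\boldsymbol{v}} - \boldsymbol{v}) - \boldsymbol{b}$ and using $(\hat{\boldsymbol{v}} - \boldsymbol{v}) \perp \boldsymbol{b}$, I get
\begin{align*}
\|\hat{\boldsymbol{v}} - \boldsymbol{u}\|_2^2 = \|\hat{\boldsymbol{v}} - \boldsymbol{v}\|_2^2 + \|\boldsymbol{b}\|_2^2.
\end{align*}
The term $\|\boldsymbol{b}\|_2^2$ does not depend on $\boldsymbol{a}$, so minimizing the left-hand side over $\boldsymbol{a}$ is equivalent to minimizing $\|\hat{\boldsymbol{v}} - \boldsymbol{v}\|_2^2$ over $\boldsymbol{a}$, and hence (taking square roots, which is monotone on nonnegative reals) the two $\arg\min$ sets coincide. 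That gives the claimed identity.

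The main obstacle is not the computation but making sure the setup is airtight: one must be clear that $\boldsymbol{M}$ is specifically the \emph{orthogonal} projection (as stated in Section~\ref{sec:att}), so that $\boldsymbol{b} \perp \mathcal{V}$ genuinely holds — Assumption~\ref{asm:formal} alone only posits \emph{some} mapping $\boldsymbol{M}(\cdot,\theta)$, so the proof should invoke the orthogonal-projection property that the text adds afterward. A minor secondary point is that the $\arg\min$ is a set (the minimizer in $\boldsymbol{a}$ need not be unique unless the $\boldsymbol{\psi}_i$ are linearly independent, which is assumed); since the two objectives differ by the additive constant $\|\boldsymbol{b}\|_2^2$, they have identical level sets and therefore identical sets of minimizers regardless of uniqueness, so the statement holds as written with no extra hypothesis needed.
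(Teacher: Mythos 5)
Your proof is correct and follows essentially the same route as the paper's: decompose $\boldsymbol{u} = \boldsymbol{v} + \boldsymbol{b}$ with $\boldsymbol{b} \perp \mathcal{V}$, apply the Pythagorean identity to get $\|\hat{\boldsymbol{v}} - \boldsymbol{u}\|_2^2 = \|\hat{\boldsymbol{v}} - \boldsymbol{v}\|_2^2 + \|\boldsymbol{b}\|_2^2$, and observe that the $\boldsymbol{a}$-independent constant $\|\boldsymbol{b}\|_2^2$ leaves the minimizers unchanged. Your added care about $\boldsymbol{M}$ being specifically the orthogonal projection and about $\arg\min$ as a set is a modest tightening of the paper's argument, not a different approach.
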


\begin{proof}
  Refer to the illustration in Figure~\ref{fig:int_space}, and denote the difference between $\boldsymbol{u}$ and $\boldsymbol{v}$ as $\boldsymbol{b}$, i.e. $\boldsymbol{u} = \boldsymbol{v} + \boldsymbol{b}$. Hence
  \begin{align*}
      {\arg\min}_{\boldsymbol{a}} ||\hat{\boldsymbol{v}} - \boldsymbol{u}||_2 =  {\arg\min}_{\boldsymbol{a}} ||\hat{\boldsymbol{v}} - \boldsymbol{v} - \boldsymbol{b}||_2.
  \end{align*}
  Note that $\boldsymbol{b}$ is perpendicular to $\hat{\boldsymbol{v}}$ and $\boldsymbol{v}$, the right hand side can then be written as 
  \begin{align*}
      {\arg\min}_{\boldsymbol{a}} ||\hat{\boldsymbol{v}} - \boldsymbol{v} - \boldsymbol{b}||_2 =  {\arg\min}_{\boldsymbol{a}} \sqrt{\left(||\hat{\boldsymbol{v}} - \boldsymbol{v}||_2^2 +||\boldsymbol{b}||_2^2\right)},
  \end{align*}
  as $\boldsymbol{b}$ is not parameterized by $\boldsymbol{a}$, we then get
  \begin{align*}
      {\arg\min}_{\boldsymbol{a}} ||\hat{\boldsymbol{v}} - \boldsymbol{v}||_2 = {\arg\min}_{\boldsymbol{a}} ||\hat{\boldsymbol{v}} - \boldsymbol{u}||_2.
  \end{align*}
\end{proof}
  
From the above proof, we know that attention mechanism is sufficient to reconstruct $\boldsymbol{v} \approx \boldsymbol{\hat{v}} = \sum_{i=1}^m a_i \boldsymbol{\psi}_i$ by minimizing $||\hat{\boldsymbol{v}} - \boldsymbol{u}||_2$. Note that from the perspective of disentanglement $\boldsymbol{u} = \boldsymbol{v} + \boldsymbol{b}$, the information in $\boldsymbol{b}$, i.e., the item specific characteristics, is not explained in our model. Intuitively, the item specific characteristics are learned from the metadata associated with the item.

\subsection{The Loss Function}
The loss function for finding the feature mapping $\boldsymbol{M}(\cdot)$ to achieve both interpretability and performance of the recommender model has $2$ components: 
\begin{itemize}
     \item $L_{pred}$ prediction loss in rating predictions, corresponding to the loss function for the base latent factor model.
     \item $L_{int}$ interpretation loss to the general feature $\boldsymbol{u}$. This loss is to quantify $||\hat{\boldsymbol{v}} - \boldsymbol{u}||_2$.
 \end{itemize}

We calculate the rating prediction loss component using RMSE:
$
    L_{pred} = \sqrt{\frac{1}{N}\sum_{(i,j)\in \text{Observed}} (r_{ij} - \hat{r}_{ij})^2},
$
where $\hat{r}_{ij}$ represents the predicted item ratings.
We then calculate the interpretation loss component as the average distance between $\boldsymbol{u}$ and $\hat{\boldsymbol{v}}$:
$
    L_{int}= \frac{1}{N}\sum_{(i,j)\in\text{Observed}}||\hat{\boldsymbol{v}} - \boldsymbol{u}||_2.
$
The loss component $L_{int}$ encourages the interpretable feature $\tilde{\boldsymbol{v}}$ obtained from the attentive neural network to be a good approximation of the aspect feature representation $\boldsymbol{v}$ (Lemma~\ref{lm}).
Hence the overall loss function is $L = L_{pred}  +\lambda L_{int}$,
where $\lambda$ is a tuning parameter to leverage importance between the two loss components. 

\noindent \textbf{Gradient Shielding Trick:} To ensure that interpretation doesn't compromise the prediction accuracy, we allow forward propagation to both $L_{int}$ and $L_{pred}$ but refrain the back-propagation from $L_{int}$ to the item embedding $\boldsymbol{u}$. In other words, when learning the model parameters based on back-propagation gradients, the item embedding $\boldsymbol{u}$ is updated only via the gradients from $L_{pred}$.

\subsection{User Preference Prediction}
Thus far we attempt to optimize the ability to predict user preference via aspect feature mapping. We call the user overall preference as \emph{general preference}, and the user preference on a specific item as \emph{specific preference}.

\noindent\textbf{General preference}: Figure~\ref{fig:general_pred} illustrates how to make prediction on user general preference. 
Here we define a virtual item $\boldsymbol{\hat{v}_k}$, which is a linear combination of aspect embeddings. For general preference, we let  $\hat{\boldsymbol{v}}_k = \boldsymbol{\psi}_k$ to simulate a pure aspect $k$ movie,
the resulting debiased (discarded all bias terms) rating prediction $\hat{p}_k$ indicates the user's preference on such specific aspect (e.g., positive for `like', negative for `dislike'). Formally:
	$\hat{p}_k = f(\boldsymbol{q}, \hat{\boldsymbol{v}}_k)$,
where $\boldsymbol{q}$ is the user embedding, $\hat{\boldsymbol{v}}_k = \boldsymbol{\psi}_k$  is the aspect $k$'s embedding, and $f(\cdot)$ is the corresponding base latent factor model without bias terms.

\begin{figure}[t]
	\centering
	\resizebox{0.55\linewidth}{!}{
	\includegraphics[width=0.5\textwidth]{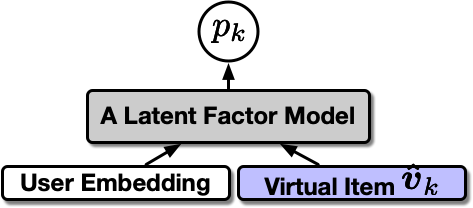}
	}
	\caption{The explanation phase: a virtual item vector $\boldsymbol{\hat{v}}_k$ is calculated to represent a specific aspect.}
	\label{fig:general_pred}
\end{figure}

\noindent\textbf{Specific preference}: Figure~\ref{fig:general_pred} also shows our model's ability to predict user preference on a specific item,
as long as we can find how to represent them in terms of aspect embeddings. Fortunately, the attention mechanism is able to help us find the constitution of any item in terms of aspect embeddings using the attention weights. That is, for any item, it is possible to rewrite the latent representation $\boldsymbol{u}$ as a linear combination of aspect embeddings:
$
    \boldsymbol{u} = \boldsymbol{\hat{v}} + \boldsymbol{b} =  \sum_k a_k \boldsymbol{\psi}_k + \boldsymbol{b},
$
where $a_k$ and $\boldsymbol{\psi}_k$ are the $k$-th attention weight and the $k$-th aspect feature, respectively. The term $\boldsymbol{b}$ reflects interpretation loss.
For aspect $k$ of an item, we use $\boldsymbol{\hat{v}}_k = a_k\boldsymbol{\psi}_k$ to represent the embedding of a virtual item which represents the aspect $k$ property of the specific item. Hence, the output $\hat{p}_k$ indicates the specific preference on aspect $k$ of a specific item.

\noindent\textbf{Model Interpretability}: From specific preference, a learned latent general feature can be decomposed into the linear combination of interpretable aspect features, which would help interpret models in a more explicit and systematic manner.


\section{Experiments and Discussion}\label{sec:exp}
We design and perform experiments to demonstrate two advantages of our AMCF approach: 1) comparable rating predictions; 2) good explanations on why a user likes/dislikes an item. To demonstrate the first advantage we compare the rating prediction performance with  baseline approaches of \emph{rating prediction only} methods.
The demonstration of the second advantage, however, is not a trivial task since currently no gold standard for evaluating explanation of recommendations except for using real customer feedback\cite{chen2019co-attentive,gao2019explainable}. Hence it's necessary to develop new schemes to evaluate the quality of explainability for both general and specific user preferences.


\subsection{Datasets}
\noindent\textbf{MovieLens Datasets} This data set \cite{harper2016movielens} offers very complete movie genre information, which provides a perfect foundation for genre (aspect) preference prediction, i.e. determining which genre a user likes most. We consider the $18$ movie genres as aspects. 
\\
\noindent\textbf{Yahoo Movies Dataset} This data set from Yahoo Lab contains usual user-movie ratings as well as metadata such as movie's title, release date, genre, directors and actors. 
We use the $29$ movie genres as the aspects for movie recommendation and explanation. Summary statistics are shown in Table~\ref{tb:dataset}.
\\
\noindent{\textbf{Pre-processing}:} We use multi-hot encoding to represent genres of each movie or book, where $1$ indicates the movie is of that genre, $0$ otherwise. However, there are still plenty of movies with missing genre information, in such cases, we simply set them as none of any listed genre, i.e., all zeros in the aspect multi-hot vector: $(0,0,...,0)$. 

\begin{table}
	\centering
	\resizebox{\linewidth}{!}{
	\begin{tabular}{|l|r|r|r|r|}
		\hline
		Dataset   & \# of ratings & \# of items &\# of users &  \# of genres \\ \hline
		MovieLens 1M & 1,000,209     & 3,706  & 6,040   &18 \\ \hline
		MovieLens 100k & 100,000     & 1,682  & 943   &18\\ \hline
		Yahoo Movie & 211,333     & 11,915  & 7,642   &29\\ \hline

	\end{tabular}
	}
	\caption{Summary statistics of the data sets.}
	\label{tb:dataset}
\end{table}

\begin{table*}[th]
\Large
\centering
\resizebox{\linewidth}{!}{
\begin{tabular}{|c|r|r|r|r|r|r|r|r|r|r|r|r|r|r|r|r|r|r|r|}
\hline
\multirow{2}{*}{Dataset} & \multicolumn{1}{c|}{\multirow{2}{*}{LR}} & \multicolumn{3}{c|}{SVD} & \multicolumn{3}{c|}{AMCF(SVD)} & \multicolumn{3}{c|}{NCF} & \multicolumn{3}{c|}{AMCF(NCF)} & \multicolumn{3}{c|}{FM} & \multicolumn{3}{c|}{AMCF(FM)} \\ \cline{3-20} 
 & \multicolumn{1}{c|}{} & 20 & 80 & 120 & 20 & 80 & 120 & 20 & 80 & 120 & 20 & 80 & 120 & 20 & 80 & 120 & 20 & 80 & 120 \\ \hline
ML100K & 1.018 & 0.908 & 0.908 & \textbf{0.907} & 0.907 & 0.909 & \textbf{0.907} & 0.939 & 0.939 & 0.936 & 0.937 & 0.939 & 0.934 & 0.937 & 0.933 & 0.929 & 0.940 & 0.936 & 0.931 \\ \hline
ML1M & 1.032 & 0.861 & 0.860 & 0.853 & 0.860 & 0.858 & \textbf{0.851} & 0.900 & 0.895 & 0.892 & 0.902 & 0.889 & 0.889 & 0.915 & 0.914 & 0.913 & 0.915 & 0.914 & 0.915 \\ \hline
Yahoo & 1.119 & 1.022 & 1.021 & 1.014 & 1.022 & 1.022 & \textbf{1.010} & 1.028 & 1.027 & 1.028 & 1.027 & 1.026 & 1.025 & 1.042 & 1.042 & 1.039 & 1.044 & 1.042 & 1.041 \\ \hline
\end{tabular}
}

	\caption{Performance comparison of rating prediction using different data sets in terms of RMSE. Texts in the parentheses indicate the base CF models that we choose for AMCF; and the numbers [20, 80, 120] indicate the dimension of the latent factors for the models. }
	\label{tab:result}
\end{table*}

\begin{table}[ht]
	\centering
	\resizebox{\linewidth}{!}{
\begin{tabular}{|c|c|c|c|c|c|c|}
\hline
Dataset & Model & T1@3 & B1@3 & T3@5 & B3@5 & $score_s$ \\ \hline
\multirow{3}{*}{ML100K} & AMCF & 0.500 & 0.481 & 0.538 & 0.553 & \textbf{0.378} \\ \cline{2-7} 
 & LR & \textbf{0.628} & \textbf{0.668} & \textbf{0.637} & \textbf{0.675} & 0.371 \\ \cline{2-7} 
 & Rand & 0.167 & 0.167 & 0.278 & 0.278 & 0 \\ \hline
\multirow{3}{*}{ML1M} & AMCF & 0.461 & 0.403 & 0.513 & 0.489 & \textbf{0.353} \\ \cline{2-7} 
 & LR & \textbf{0.572} & \textbf{0.565} & \textbf{0.598} & \textbf{0.620} & 0.322 \\ \cline{2-7} 
 & Rand & 0.167 & 0.167 & 0.278 & 0.278 & 0 \\ \hline
\multirow{3}{*}{Yahoo} & AMCF & 0.413 & 0.409 & 0.422 & 0.440 & 0.224 \\ \cline{2-7} 
 & LR & \textbf{0.630} & \textbf{0.648} & \textbf{0.628} & \textbf{0.565} & \textbf{0.235} \\ \cline{2-7} 
 & Rand & 0.103 & 0.103 & 0.172 & 0.172 & 0 \\ \hline
\end{tabular}
	}
	\caption{Preferences outputs: TM@K/BM@K represent Top/ Bottom M recall at K, and $score_s$ represents the specific preference. The Rand rows show the theoretical random preference outputs. Here AMCF takes SVD with 120 latent factors as the base model.}
	\label{tb:gen}
\end{table}

\subsection{Results of Prediction Accuracy}\label{sec:baseline}
We select several strong baseline models to compare rating prediction accuracy, including non-interpretable models, such as SVD \cite{koren2009matrix}, Neural Collaborative Filtering (NCF) \cite{he2017neural} and Factorization Machine (FM) \cite{rendle2010factorization}, and an interpretable linear regression model (LR). Here the LR model is implemented by using aspects as inputs and learning separate parameter sets for different individual users. In comparison, our AMCF approaches also include SVD, NCF or FM as the base model to demonstrate that the interpretation module doesn't compromise the prediction accuracy. Note that since regular NCF and FM are designed for implicit ratings (1 and 0), we replace their last sigmoid output layers with fully connected layers in order to output explicit ratings. 

In terms of robustness, we set the dimension of latent factors in the base models to $20, 80$, and $120$. The regularization tuning parameter $\lambda$ is set to $0.05$, which demonstrated better performance compared to other selections. 
It is worth noting that the tuning parameters of the base model of our AMCF approach are directly inherited from the corresponding non-interpretable model. We compare our AMCF models with baseline models as shown in Table~\ref{tab:result}. It is clear that AMCF achieves comparable prediction accuracy to their non-interpretable counterparts, and significantly outperforms the interpretable LR model.




\subsection{Evaluation of Explainability}\label{sec:exev}
Despite the recent efforts have been made to evaluate the quality of explanation by defining explainability precision (EP) and explainability recall (ER)\cite{peake2018explanation,abdollahi2016explainable}, the scarcity of ground truth such as a user's true preference remains a significant obstacle for explainable recommendation. \cite{gao2019explainable} make an initial effort in collecting ground truth by surveying real customers, however, the labor intense, time consuming and sampling bias may prevent its large-scale applications in a variety of contexts.
Other text-based approaches \cite{costa2018automatic,lu2018like} can also use natural language processing (NLP) metrics such as Automated Readability Index (ARI) and Flesch Reading Ease (FRE). 
As we don't use metadata such as text reviews in our AMCF model, user review based explanation and evaluation could be a potential future extension to our model. 

Here we develop novel quantitative evaluation schemes to assess our model's explanation quality in terms of general preferences and specific preferences, respectively. 


\subsubsection{General Preference}
Let's denote the ground truth of user general preferences as $\boldsymbol{p}_i$ for user $i$, and the model's predicted preference for user $i$ is $\hat{\boldsymbol{p}}_i$, we propose measures inspired by  \emph{Recall@K} in recommendation evaluations.

\textbf{Top $M$ recall at $K$} (TM@K): Given the $M$ most preferred aspects of a user $i$ from $\boldsymbol{p}_i$, top $M$ recall at $K$ is defined as the ratio of the $M$ aspects located in the top $K$ highest valued aspects in $\hat{\boldsymbol{p}}_i$. For example, if $\boldsymbol{p}_i$ indicates that user $i$'s top $3$ preferred aspects are \emph{Adventure, Drama}, and \emph{Thriller}, while the predicted $\hat{\boldsymbol{p}}_i$ shows that the top $5$ are \emph{Adventure, Comedy, Children, Drama, Crime}, the top $3$ recalls at $5$ (T3@5) is then $2/3$ whereas top $1$ recall at $3$ (T1@3) is $1$.

\textbf{Bottom $M$ recall at $K$} (BM@K): Similarly defined as above, except that it measures the most disliked aspects. 

As the ground truth of user preferences are usually not available, some reasonable approximations are needed. 
Hence we propose a method to calculate the so-called surrogate ground truth. First we define the weights $\boldsymbol{w}_{ij} = (r_{ij} - b^u_i - b^v_j - \bar{r}) / A $, where the weight $\boldsymbol{w}_{ij}$ is calculated by nullifying user bias $b^u_i$, item bias $b^v_j$, and global average $\bar{r}$, and $A$ is a constant indicating the maximum rating (e.g. $A=5$ for most datasets). Note that user bias $b^u_i$ and item bias $b^v_j$ can be easily calculated by $b^u_i = (\frac{1}{|V_i|}\sum_{j\in V_i}r_{ij}) - \bar{r}$, and $b^v_j = (\frac{1}{|U_j|}\sum_{i\in U_j}r_{ij}) - \bar{r}$. Here $V_i$ represents the sets of items rated by user $i$, and $U_j$ represents the sets of users that have rated item $j$. With the weights we calculate user $i$'s preference on aspect $t$ using the following formula:
    $\boldsymbol{p}_{i}^t = \sum_{j\in V_i}\boldsymbol{w}_{ij} s_{j}^t,$
where $s_j^t = 1$ if item $j$ has aspect $t$, $0$ otherwise. Hence a user $i$'s overall preference can be represented by an $l_1$ normalized vector $\boldsymbol{p}_i = (\boldsymbol{p}_{i}^1,...,\boldsymbol{p}_{i}^t,...,\boldsymbol{p}_{i}^T)/||\boldsymbol{p}_i||_1$.
As our model can output a user preference vector directly, we evaluate the explainability by calculating the average of TM@K and BM@K.
The evaluation results are reported in Table~\ref{tb:gen}. We observe that the explainability of AMCF is significantly better than random interpretation, and is comparable to the strong interpretable baseline LR model with much better prediction accuracy. Thus our AMCF model successfully integrates the strong prediction performance of a latent factor model and the strong interpretability of a LR model.

\subsubsection{Specific Preference}
Our approach is also capable of predicting a user's preference on a specific item, i.e. $\hat{\boldsymbol{p}}_{ij}$, showing which aspects of item $j$ are liked/disliked by the user $i$. Compared to user general preference across all items, the problem of which aspect of an item attracts the user most (specific preference) is more interesting and more challenging.
There is no widely accepted strategy to evaluate the quality of single item preference prediction (except for direct customer survey). 
Here we propose a simple yet effective evaluation scheme to illustrate the quality of our model's explanation on user specific preference. With the overall preference $\boldsymbol{p}_i$ of user $i$ given above, and assuming $\boldsymbol{s}_j$ is the multi-hot vector represents the aspects of item $j$, we say the element-wise product $\boldsymbol{p}_{ij} = \boldsymbol{p}_i \odot \boldsymbol{s}_j$ reflects the user's specific preference on item $j$. 

Note that we should not use the TM@K/BM@K scheme as in general preference evaluation, both $\boldsymbol{p}_{ij}$ and predicted $\boldsymbol{\hat{p}}_{ij}$'s entries are mostly zeros, since each movie is only categorized into a few genres. Hence the quality of specific preference prediction is expressed using a similarity measure. We use $s(\boldsymbol{p}_{ij}, \hat{\boldsymbol{p}}_{ij})$ to represent the cosine similarity between $\boldsymbol{p}_{ij}$ and $\hat{\boldsymbol{p}}_{ij}$, and the score for specific preference prediction is defined by averaging over all user-item pairs in the test set:
    $score_s = \frac{1}{N}\sum_{ij} s(\boldsymbol{p}_{ij}, \hat{\boldsymbol{p}}_{ij}).$
We report the results of specific user preferences in the $score_s$ column of Table~\ref{tb:gen}. As the LR cannot give specific user preferences directly, we simply apply
 $\boldsymbol{\hat{p}}_{ij} = \boldsymbol{\hat{p}}_i \odot \boldsymbol{s}_j$ where $\boldsymbol{\hat{p}}_i$ represents the general preference predicted by LR.

\begin{figure}[t]
	\centering
	\resizebox{1\linewidth}{!}{
		\includegraphics[width=\textwidth]{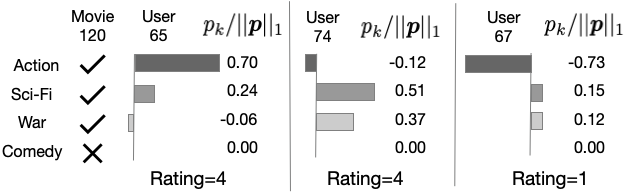}
	}
	\caption{Examples of explainable recommendations. We $l_1$-normalize the preference vector $\boldsymbol{p}$ to make the comparison fair. }
	\label{fig:ex}
\end{figure}

\textbf{An insight}: Assume that for a specific user $i$, our AMCF model can be simply written as  $\hat{r}_{ij} = f_i(\boldsymbol{u}_j)$ to predict the rating for item $j$. Note that our AMCF model can decompose the item in terms of aspects. Lets denote these aspects as $\{\psi_1, ... \psi_m\}$. Then the prediction can be approximated by $\hat{r}_{ij} \approx f_i(\sum_{k=1}^m a_{jk} \psi_k)$, where $a_{jk}$ denote the $k$-th attention weights for item $j$. In the case of LR, the rating is obtained by $\hat{r}_{ij} = g_i(\sum_{k=1}^m b_{k} x_k)$, where $g_i$ is the LR model for user $i$,  $b_{k}$ is the $k$-th coefficient of it, and $x_k$ represents the indicator of aspect $k$, $x_k=1$ when the item has aspect $k$, $x_k=0$ otherwise. The similarity between AMCF formula and LR formula listed above indicates that the coefficients of LR and the preference output of AMCF share the same intrinsic meaning, i.e., both indicate the importance of aspects.

\textbf{An example}:
For specific explanation, given a user $i$ and an item $j$, our AMCF model predicts a vector $\boldsymbol{p}$, representing the user $i$'s specific preference on an item $j$ in terms of all predefined aspects. Specifically, the magnitude of each entry of $\boldsymbol{p}$ (i.e. $|p_i|$) represents the impact of a specific aspect on whether an item liked by a user or not. For example, in Figure~\ref{fig:ex}, the movie $120$ is high-rated by both users $65$ and $74$, however, with differential explanations: the former user preference is more on the \textit{Action} genre whereas the latter is more on \textit{Sci-Fi} and \textit{War}. On the other hand, the same movie is low-rated by user $67$ mainly due to the dislike of \textit{Action} genre. 

\section{Conclusion}
Modelers tend to better appreciate the interpretable recommender systems whereas users are more likely to accept the explainable recommendations. In this paper, we proposed a novel interpretable feature mapping strategy attempting to achieve both goals: systems interpretability and recommendation explainability. Using extensive experiments and tailor-made evaluation schemes, our AMCF method demonstrates strong performance in both recommendation and explanation.

\section{Acknowledgement}
This work is supported by the National Science Foundation under grant no. IIS-1724227.

\bibliographystyle{named}
\bibliography{ref}

\end{document}